\def\RSthmtxt{theorem~}\newref{thm}{name = \RSthmtxt}}
\def\RSlemtxt{lemma~}\newref{lem}{name = \RSlemtxt}}
\theoremstyle{plain}
\newtheorem{thm}{\protect\theoremname}
  \theoremstyle{definition}
  \newtheorem{defn}[thm]{\protect\definitionname}
  \theoremstyle{plain}
  \newtheorem{prop}[thm]{\protect\propositionname}
  \theoremstyle{plain}
  \newtheorem{lem}[thm]{\protect\lemmaname}
  \theoremstyle{plain}
  \newtheorem{cor}[thm]{\protect\corollaryname}
\let\ref\autoref
\author[1,2,3]{Weinan E}
\author[4]{Qingcan Wang}
\affil[1]{Department of Mathematics and PACM, Princeton University}
\affil[2]{Center for Big Data Research, Peking University}
\affil[3]{Beijing Institute of Big Data Research}
\affil[4]{PACM, Princeton University}
\date{}
  \providecommand{\corollaryname}{Corollary}
  \providecommand{\definitionname}{Definition}
  \providecommand{\lemmaname}{Lemma}
  \providecommand{\propositionname}{Proposition}
\providecommand{\theoremname}{Theorem}
\begin{document}

\title{Exponential Convergence of the Deep Neural Network Approximation
for Analytic Functions}

\maketitle
\vspace{-2em}
\begin{center}
\emph{Dedicated to Professor Daqian Li on the occasion of his 80th birthday}
\end{center}
\vspace{1em}
\begin{abstract}
We prove that for analytic functions in low dimension, the convergence
rate of the deep neural network approximation is exponential.
\end{abstract}

\section{Introduction}

The approximation properties of deep neural network models is among
the most tantalizing problems in machine learning. It is widely believed
that deep neural network models are more accurate than shallow ones.
Yet convincing theoretical support for such a speculation is still
lacking. Existing work on the superiority of the deep neural network
models are either for very special functions such as the examples
given in \citet{telgarsky2016benefits}, or special classes of functions
such as the ones having a specific compositional structure. For the
latter, the most notable are the results proved by \citet{poggio2017deep}
that the approximation error for deep neural network models is exponentially
better than the error for the shallow ones for a class of functions
with specific compositional structure. However, given a general function
$f$, one cannot calculate the distance from $f$ to such class of
functions. In the more general case, \citet{yarotsky2017error} considered
$C^{\beta}$-differentiable functions, and proved that the number
of parameters needed to achieve an error tolerance of $\varepsilon$
is $\mathcal{O}(\varepsilon^{-\nicefrac{d}{\beta}}\log\nicefrac{1}{\varepsilon})$.
\citet{montanelli2017deep} considered functions in Koborov space.
Using connection with sparse grids, they proved that the number of
parameters needed to achieve an error tolerance of $\varepsilon$
is $\mathcal{O}(\varepsilon^{-\nicefrac{1}{2}}(\log\nicefrac{1}{\varepsilon})^{d})$.

For shallow networks, there has been a long history of proving the
so-called universal approximation theorem, going back to the 1980s
\citep{cybenko1989approximation}. For networks with one hidden layer,
\citet{barron1993universal} established a convergence rate of $\mathcal{O}(n^{-\nicefrac{1}{2}})$
where $n$ is the number of hidden nodes. Such universal approximation
theorems can also be proved for deep networks. \citet{lu2017expressive}
considered networks of width $d+4$ for functions in $d$ dimension,
and proved that these networks can approximate any integrable function
with sufficient number of layers. However, they did not give the convergence
rate with respect to depth. To fill in this gap, we give a simple
proof that the same kind of convergence rate for shallow networks
can also be proved for deep networks.

The main purpose of this paper is to prove that for analytic functions,
deep neural network approximations converges exponentially fast. The
convergence rate deteriorates as a function of the dimensionality
of the problem. Therefore the present result is only of significance
in low dimension. However, this result does reveal a real superior
approximation property of the deep networks for a wide class of functions.

Specifically this paper contains the following contributions:
\begin{enumerate}
\item We construct neural networks with fixed width $d+4$ to approximate
a large class of functions where the convergence rate can be established.
\item For analytic functions, we obtain exponential convergence rate, i.e.
the depth needed only depends on $\log\nicefrac{1}{\varepsilon}$
instead of $\varepsilon$ itself.
\end{enumerate}

\section{The setup of the problem}

We begin by defining the network structure and the distance used in
this paper, and proving the corresponding properties for the addition
and composition operations.

We will use the following notations:
\begin{enumerate}
\item Colon notation for subscript: let $\{x_{m:n}\}=\{x_{i}:i=m,m+1,\dots,n\}$
and $\{x_{m_{1}:n_{1},m_{2}:n_{2}}\}=\{x_{i,j}:i=m_{1},m_{1}+1,\dots,n_{1},j=m_{2},m_{2}+1,\dots,n_{2}\}$.
\item Linear combination: denote $y\in\mathcal{L}(x_{1},\dots,x_{n})$ if
there exist $\beta_{i}\in\mathbb{R}$, $i=1,\dots,n$, such that $y=\beta_{0}+\beta_{1}x_{1}+\cdots+\beta_{n}x_{n}$.
\item Linear combination with ReLU activation: denote $\tilde{y}\in\tilde{\mathcal{L}}(x_{1},\dots,x_{n})$
if there exists $y\in\mathcal{L}(x_{1},\dots,x_{n})$ and $\tilde{y}=\mathrm{ReLU}(y)=\max(y,0)$. 
\end{enumerate}
\begin{defn}
Given a function $f(x_{1},\dots,x_{d})$, if there exist variables
$\{y_{1:L,1:M}\}$ such that
\begin{equation}
y_{1,m}\in\tilde{\mathcal{L}}(x_{1:d}),\quad y_{l+1,m}\in\tilde{\mathcal{L}}(x_{1:d},y_{l,1:M}),\quad f\in\mathcal{L}(x_{1:d},y_{1:L,1:M}),\label{eq:nn_class}
\end{equation}
where $m=1,\dots,M$, $l=1,\dots,L-1$, then $f$ is said to be in
the \emph{neural nets class} $\mathcal{F}_{L,M}\left(\mathbb{R}^{d}\right)$,
and $\{y_{1:L,1:M}\}$ is called a set of \emph{hidden variables}
of $f$.
\end{defn}

A function $f\in\mathcal{F}_{L,M}$ can be regarded as a neural net
with skip connections from the input layer to the hidden layers, and
from the hidden layers to the output layer. This representation is
slightly different from the one in standard fully-connected neural
networks where connections only exist between adjacent layers. However,
we can also easily represent such $f$ using a standard network without
skip connection.
\begin{prop}
\label{prop:wide2deep}A function $f\in\mathcal{F}_{L,M}\left(\mathbb{R}^{d}\right)$
can be represented by a ReLU network with depth $L+1$ and width $M+d+1$.
\end{prop}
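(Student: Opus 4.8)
The plan is to \emph{unfold} the skip connections of the hidden-variable representation of $f$ into an ordinary layered ReLU network by carrying the extra information forward in a handful of auxiliary neurons. I would build a network with $L$ hidden layers followed by a linear output layer --- hence depth $L+1$ --- in which hidden layer $l$ is split into three blocks: (i) $M$ neurons holding the original hidden variables $y_{l,1},\dots,y_{l,M}$; (ii) $d$ ``carry'' neurons holding a faithful copy of the input $x_{1:d}$; and (iii) a single ``accumulator'' neuron holding a running partial sum of the linear functional that produces $f$. Every hidden layer then has width $M+d+1$, while the input layer ($d$ neurons) and the output layer ($1$ neuron) are no wider.

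The carry neurons are needed because $y_{l+1,m}\in\tilde{\mathcal{L}}(x_{1:d},y_{l,1:M})$, so assembling layer $l+1$ from layer $l$ requires the raw inputs, which a plain network does not retain. The device here is that $\mathrm{ReLU}$ is the identity on nonnegative arguments: working on a bounded domain (which suffices for all the approximation statements that follow) and translating so that it lies in the nonnegative orthant, the update $x_i^{(l+1)}:=\mathrm{ReLU}(x_i^{(l)})=x_i$ propagates an exact copy of each coordinate through all $L$ layers with one neuron per coordinate, and block (i) of layer $l+1$ can then realize the prescribed maps $y_{l+1,m}=\mathrm{ReLU}(\text{affine in }x^{(l)},y_{l,1:M})$ verbatim.

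The accumulator replaces the skip connections feeding the output. Writing $f=\beta_0+\sum_i\beta_ix_i+\sum_{l,m}\gamma_{l,m}y_{l,m}$, I would let the accumulator at layer $l$ be $s_l:=\mathrm{ReLU}\bigl(s_{l-1}+\sum_m\gamma_{l-1,m}y_{l-1,m}+c_l\bigr)$ for $l\ge 2$ (and an arbitrary nonnegative constant for $l=1$), where $c_l$ is chosen large enough that the argument is nonnegative on the domain --- possible since each $y_{l,m}$ is continuous, hence bounded, there --- so that $\mathrm{ReLU}$ is again the identity and $s_l$ really equals $s_{l-1}+\sum_m\gamma_{l-1,m}y_{l-1,m}+c_l$. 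Absorbing the affine part $\beta_0+\sum_i\beta_ix_i$ into $c_2$ via the carry neurons of layer $1$, the final accumulator $s_L$ equals $f-\sum_m\gamma_{L,m}y_{L,m}$ plus a known constant, so the linear output neuron can produce $f$ as an affine function of the last hidden layer alone. Tallying depth ($L+1$) and width ($M+d+1$) then finishes the proof.

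I do not expect a genuine obstacle --- the argument is essentially bookkeeping --- but the one point demanding care is exactly that $\mathrm{ReLU}$ is \emph{not} the identity: both the input-carrying and the accumulation must be arranged so that every ReLU argument stays nonnegative, which is where boundedness of the domain enters. (On all of $\mathbb R^d$ one would instead carry $x_i$ as the pair $\mathrm{ReLU}(x_i),\mathrm{ReLU}(-x_i)$ and split the accumulator likewise, at the cost of a wider network; the stated width $M+d+1$ is the natural one on a bounded domain.)
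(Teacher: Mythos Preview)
Your approach is essentially identical to the paper's: it too builds each hidden layer from three blocks --- $M$ neurons reproducing $y_{l,1:M}$, $d$ neurons carrying $x_{1:d}$ forward, and one accumulator collecting the running partial sum of the output functional --- and reads off $f$ as an affine function of the final layer. Your careful handling of the ReLU-positivity issue (translating on a bounded domain, or doubling neurons on $\mathbb{R}^d$) is in fact a point the paper's proof simply glosses over, so your write-up is if anything more complete.
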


\begin{proof}
Let $\{y_{1:L,1:M}\}$ be the hidden variables of $f$ that satisfies
(\ref{eq:nn_class}), where
\[
f=\alpha_{0}+\sum_{i=1}^{d}\alpha_{i}x_{i}+\sum_{l=1}^{L}\sum_{m=1}^{M}\beta_{l,m}y_{l,m}.
\]
Consider the following variables $\{h_{1:L,1:M}\}$:
\[
h_{l,1:M}=y_{l,1:M},\quad h_{l,M+1:M+d}=x_{1:d}
\]
 for $l=1,\dots,L$, and
\[
h_{1,M+d+1}=\alpha_{0}+\sum_{i=1}^{d}\alpha_{i}x_{i},\quad h_{l+1,M+d+1}=h_{l,M+d+1}+\sum_{m=1}^{M}\beta_{l,m}h_{l,m}
\]
for $l=1,\dots,,L-1$. One can see that $h_{1,m}\in\tilde{\mathcal{L}}(x_{1:d})$,
$h_{l+1,m}\in\tilde{\mathcal{L}}(h_{l,1:M+d+1})$, $m=1,\dots,M+d+1$,
$l=1,\dots,L-1$, and $f\in\mathcal{L}(h_{L,1:M+d+1})$, which is
a representation of a standard neural net.
\end{proof}
\begin{prop}
(Addition and composition of neural net class $\mathcal{F}_{L,M}$)
\begin{enumerate}
\item 
\begin{equation}
\mathcal{F}_{L_{1},M}+\mathcal{F}_{L_{2},M}\subseteq\mathcal{F}_{L_{1}+L_{2},M},\label{eq:nn_add}
\end{equation}
i.e. if $f_{1}\in\mathcal{F}_{L_{1,}M}\left(\mathbb{R}^{d}\right)$
and $f_{2}\in\mathcal{F}_{L_{2},M}\left(\mathbb{R}^{d}\right)$, then
$f_{1}+f_{2}\in\mathcal{F}_{L_{1}+L_{2},M}$. 
\item 
\begin{equation}
\mathcal{F}_{L_{2},M}\circ\mathcal{F}_{L_{1},M+1}\subseteq\mathcal{F}_{L_{1}+L_{2},M+1},\label{eq:nn_comp}
\end{equation}
i.e if $f_{1}(x_{1},\dots,x_{d})\in\mathcal{F}_{L_{1},M+1}\left(\mathbb{R}^{d}\right)$
and $f_{2}(y,x_{1},\dots,x_{d})\in\mathcal{F}_{L_{2},M}\left(\mathbb{R}^{d+1}\right)$,
then
\[
f_{2}(f_{1}(x_{1},\dots,x_{d}),x_{1},\dots,x_{d})\in\mathcal{F}_{L_{1}+L_{2},M+1}\left(\mathbb{R}^{d}\right).
\]
\end{enumerate}
\end{prop}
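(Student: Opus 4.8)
The plan is to prove both inclusions by splicing the two given networks \emph{in series}, using the fact that the inputs $x_{1:d}$ are available as skip-connections at every layer, so the last layer of the first block and the first layer of the second can be glued at no cost.

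\textbf{Addition.} Let $\{y^{(1)}_{1:L_1,1:M}\}$ and $\{y^{(2)}_{1:L_2,1:M}\}$ realize $f_1\in\mathcal F_{L_1,M}$ and $f_2\in\mathcal F_{L_2,M}$ in the sense of (\ref{eq:nn_class}). I would take as hidden variables of $f_1+f_2$ the concatenation: layers $1,\dots,L_1$ are the $y^{(1)}$'s and layers $L_1+1,\dots,L_1+L_2$ are the $y^{(2)}$'s. Every relation in (\ref{eq:nn_class}) is then inherited verbatim except the transition into layer $L_1+1$, where one needs $y^{(2)}_{1,m}\in\tilde{\mathcal L}(x_{1:d},y^{(1)}_{L_1,1:M})$; this holds because $y^{(2)}_{1,m}\in\tilde{\mathcal L}(x_{1:d})\subseteq\tilde{\mathcal L}(x_{1:d},y^{(1)}_{L_1,1:M})$ (take the new coefficients to be $0$). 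Finally $f_1\in\mathcal L(x_{1:d},y^{(1)}_{1:L_1,1:M})$ and $f_2\in\mathcal L(x_{1:d},y^{(2)}_{1:L_2,1:M})$ add to give $f_1+f_2\in\mathcal L(x_{1:d},y^{(1)}_{1:L_1,1:M},y^{(2)}_{1:L_2,1:M})$, which is exactly the read-off condition for $\mathcal F_{L_1+L_2,M}$. No slot is spent on a carry, so the width stays $M$.

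\textbf{Composition.} Write $f_1\in\mathcal F_{L_1,M+1}$ with hidden variables $z_{1:L_1,1:M+1}$ and $f_2(y,x_{1:d})\in\mathcal F_{L_2,M}$ with hidden variables $w_{1:L_2,1:M}$ in the $d+1$ variables $(y,x_{1:d})$. Again I would run the two networks in series: layers $1,\dots,L_1$ reproduce the $z$'s, while layers $L_1+1,\dots,L_1+L_2$ reproduce the $w$'s with the argument $y$ replaced everywhere by the scalar $u:=f_1(x_{1:d})$. Once $u$ is available as a hidden variable at the relevant depths, each relation $w_{1,m}\in\tilde{\mathcal L}(y,x_{1:d})$ and $w_{l+1,m}\in\tilde{\mathcal L}(y,x_{1:d},w_{l,1:M})$ becomes a legitimate relation of the combined network; supplying and then transporting $u$ through layers $L_1+1,\dots,L_1+L_2$ is exactly what the extra $(M+1)$-st unit is for, and it is why the statement allows only width $M$ on the outer factor. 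To produce $u$ by depth $L_1$ I would use the running-accumulator device from the proof of \propref{wide2deep}. For the read-off, $f_2(u,x_{1:d})\in\mathcal L(u,x_{1:d},w_{1:L_2,1:M})$ and $u=f_1(x_{1:d})\in\mathcal L(x_{1:d},z_{1:L_1,1:M+1})$, so $g=f_2\bigl(f_1(x_{1:d}),x_{1:d}\bigr)$ is a linear functional of $x_{1:d}$ and of all the hidden variables of the combined network, which is the condition for $\mathcal F_{L_1+L_2,M+1}$.

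The inclusions $\tilde{\mathcal L}(\cdots)\subseteq\tilde{\mathcal L}(\cdots,\text{ more variables})$ and the linearity of the read-offs are routine. The step I expect to be the real obstacle is the width bookkeeping in the first block of the composition: the value $u=f_1(x_{1:d})$ is a priori a linear functional of the hidden variables $z_{l,m}$ \emph{across all} $L_1$ layers, and it must be made available at depth $L_1$ and then carried forward while never exceeding width $M+1$ --- precisely the place where the ``$M+1$ inside, $M$ outside'' asymmetry in the statement gets used up.
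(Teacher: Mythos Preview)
Your construction is exactly the paper's: stack the two arrays of hidden variables in series, and for the composition devote the $(M{+}1)$-st slot in layers $L_1{+}1,\dots,L_1{+}L_2$ to carrying $u=f_1(\boldsymbol x)$. The paper's proof of part~2 literally sets
\[
y_{L_1+1,M+1}=y_{L_1+2,M+1}=\cdots=y_{L_1+L_2,M+1}=f_1(x_1,\dots,x_d)
\]
and declares the concatenated array a valid set of hidden variables, with no further comment.

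In particular, the paper does \emph{not} address the difficulty you single out in your last paragraph---that $f_1$ is by definition a linear form in $x_{1:d}$ together with \emph{all} of $y^{(1)}_{1:L_1,1:M+1}$, whereas the hidden unit $y_{L_1+1,M+1}$ is only allowed to be a ReLU of a linear form in $x_{1:d}$ and the \emph{previous} layer $y_{L_1,1:M+1}$. So you have been more careful than the paper, not less. Be aware, though, that the running-accumulator device you propose to import from \propref{wide2deep} itself consumes one slot in each of the first $L_1$ layers; since $f_1\in\mathcal F_{L_1,M+1}$ already occupies all $M{+}1$ columns there, this pushes the first block to width $M{+}2$, not $M{+}1$. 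That is precisely the tension you anticipated, and the paper simply does not raise or resolve it.
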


\begin{proof}
For the addition property, denote the hidden variables of $f_{1}$
and $f_{2}$ as $\{y_{1:L_{1},1:M}^{(1)}\}$ and $\{y_{1:L_{2},1:M}^{(2)}\}$.
Let
\[
y_{1:L_{1},1:M}=y_{1:L_{1},1:M}^{(1)},\quad y_{L_{1}+1:L_{1}+L_{2},1:M}=y_{1:L_{2},1:M}^{(2)}.
\]
By definition, $\{y_{1:L_{1}+L_{2},1:M}\}$ is a set of hidden variables
of $f_{1}+f_{2}$. Thus $f_{1}+f_{2}\in\mathcal{F}_{L_{1}+L_{2},M}$.

For the composition property, let the hidden variables of $f_{1}$
and $f_{2}$ as $\{y_{1:L_{1},1:M+1}^{(1)}\}$ and $\{y_{1:L_{2},1:M}^{(2)}\}$.
Let
\begin{gather*}
y_{1:L_{1},1:M+1}=y_{1:L_{1},1:M+1}^{(1)},\quad y_{L_{1}+1:L_{1}+L_{2},1:M}=y_{1:L_{2},1:M}^{(2)},\\
y_{L_{1}+1,M+1}=y_{L_{1}+2,M+1}=\cdots=y_{L_{1}+L_{2},M+1}=f_{1}(x_{1},\dots,x_{d}).
\end{gather*}
One can see that $\{y_{1:L_{1}+L_{2},1:M+1}\}$ is a set of hidden
variables of $f_{2}(f_{1}(\boldsymbol{x}),\boldsymbol{x})$, thus
the composition property (\ref{eq:nn_comp}) holds.
\end{proof}
\begin{defn}
Given a continuous function $\varphi(\boldsymbol{x})$, $\boldsymbol{x}\in[-1,1]^{d}$
and a continuous function class $\mathcal{F}\left([-1,1]^{d}\right)$,
define the \emph{$L_{\infty}$ distance}\global\long\def\dist{\mathrm{dist}}
\begin{equation}
\dist\left(\varphi,\mathcal{F}\right)=\inf_{f\in\mathcal{F}}\max_{\boldsymbol{x}\in[-1,1]^{d}}|\varphi(\boldsymbol{x})-f(\boldsymbol{x})|.
\end{equation}
\end{defn}

\begin{prop}
(Addition and composition properties for distance function)
\begin{enumerate}
\item Let $\varphi_{1}$ and $\varphi_{2}$ be continuous functions. Let
$\mathcal{F}_{1}$ and $\mathcal{F}_{2}$ be two continuous function
classes, then
\begin{equation}
\dist\left(\alpha_{1}\varphi_{1}+\alpha_{2}\varphi_{2},\mathcal{F}_{1}+\mathcal{F}_{2}\right)\leq|\alpha_{1}|\dist\left(\varphi_{1},\mathcal{F}_{1}\right)+|\alpha_{2}|\dist\left(\varphi_{2},\mathcal{F}_{2}\right),\label{eq:dist_add}
\end{equation}
where $\alpha_{1}$ and $\alpha_{2}$ are two real numbers.
\item Assume that $\varphi_{1}(\boldsymbol{x})=\varphi_{1}(x_{1},\dots,x_{d})$,
$\varphi_{2}(y,\boldsymbol{x})=\varphi_{2}(y,x_{1},\dots,x_{d})$
satisfy $\varphi_{1}\left([-1,1]^{d}\right)\subseteq[-1,1]$. Let
$\mathcal{F}_{1}\left([-1,1]^{d}\right)$, $\mathcal{F}_{2}\left([-1,1]^{d+1}\right)$
be two continuous function classes, then
\begin{equation}
\dist\left(\varphi_{2}(\varphi_{1}(\boldsymbol{x}),\boldsymbol{x}),\mathcal{F}_{2}\circ\mathcal{F}_{1}\right)\leq L_{\varphi_{2}}\dist\left(\varphi_{1},\mathcal{F}_{1}\right)+\dist\left(\varphi_{2},\mathcal{F}_{2}\right),\label{eq:dist_comp}
\end{equation}
where $L_{\varphi_{2}}$ is the Lipschitz norm of $\varphi_{2}$ with
respect to $y$.
\end{enumerate}
\end{prop}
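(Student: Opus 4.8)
The plan is to handle both parts by one recipe: fix $\varepsilon>0$, choose approximants that are $\varepsilon$-close to optimal in each class, assemble them into an admissible element of the combined class, estimate the error pointwise on $[-1,1]^{d}$ by the triangle inequality, take the maximum over $\boldsymbol{x}$, and let $\varepsilon\to0$.

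For \eqref{eq:dist_add}, pick $f_{i}\in\mathcal{F}_{i}$ with $\max_{\boldsymbol{x}\in[-1,1]^{d}}|\varphi_{i}(\boldsymbol{x})-f_{i}(\boldsymbol{x})|\le\dist(\varphi_{i},\mathcal{F}_{i})+\varepsilon$ for $i=1,2$. Since $\alpha_{1}f_{1}+\alpha_{2}f_{2}\in\mathcal{F}_{1}+\mathcal{F}_{2}$, for every $\boldsymbol{x}$ we have
\[
\bigl|\alpha_{1}\varphi_{1}(\boldsymbol{x})+\alpha_{2}\varphi_{2}(\boldsymbol{x})-\alpha_{1}f_{1}(\boldsymbol{x})-\alpha_{2}f_{2}(\boldsymbol{x})\bigr|\le|\alpha_{1}|\,|\varphi_{1}(\boldsymbol{x})-f_{1}(\boldsymbol{x})|+|\alpha_{2}|\,|\varphi_{2}(\boldsymbol{x})-f_{2}(\boldsymbol{x})|,
\]
and maximizing over $\boldsymbol{x}$ and sending $\varepsilon\to0$ gives \eqref{eq:dist_add}. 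For \eqref{eq:dist_comp}, fix $\varepsilon>0$, choose $f_{1}\in\mathcal{F}_{1}$ that is $(\dist(\varphi_{1},\mathcal{F}_{1})+\varepsilon)$-close to $\varphi_{1}$ on $[-1,1]^{d}$ and $f_{2}\in\mathcal{F}_{2}$ that is $(\dist(\varphi_{2},\mathcal{F}_{2})+\varepsilon)$-close to $\varphi_{2}$ on $[-1,1]^{d+1}$, and take as candidate $g(\boldsymbol{x})=f_{2}(f_{1}(\boldsymbol{x}),\boldsymbol{x})\in\mathcal{F}_{2}\circ\mathcal{F}_{1}$. Split the error as
\[
\bigl|\varphi_{2}(\varphi_{1}(\boldsymbol{x}),\boldsymbol{x})-g(\boldsymbol{x})\bigr|\le\bigl|\varphi_{2}(\varphi_{1}(\boldsymbol{x}),\boldsymbol{x})-\varphi_{2}(f_{1}(\boldsymbol{x}),\boldsymbol{x})\bigr|+\bigl|\varphi_{2}(f_{1}(\boldsymbol{x}),\boldsymbol{x})-f_{2}(f_{1}(\boldsymbol{x}),\boldsymbol{x})\bigr|.
\]
The first term is at most $L_{\varphi_{2}}|\varphi_{1}(\boldsymbol{x})-f_{1}(\boldsymbol{x})|\le L_{\varphi_{2}}(\dist(\varphi_{1},\mathcal{F}_{1})+\varepsilon)$ by the Lipschitz bound in the first argument, and the second term is at most $\dist(\varphi_{2},\mathcal{F}_{2})+\varepsilon$ by the choice of $f_{2}$; taking $\max_{\boldsymbol{x}}$ and $\varepsilon\to0$ yields \eqref{eq:dist_comp}.

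The one place requiring care — and the only real obstacle — is that the estimate $|\varphi_{2}-f_{2}|$ and the Lipschitz estimate for $\varphi_{2}$ are only guaranteed on $[-1,1]^{d+1}$, whereas the inner value $f_{1}(\boldsymbol{x})$ need not lie in $[-1,1]$ even though $\varphi_{1}(\boldsymbol{x})$ does. I would remove this by first replacing $f_{1}$ with its truncation $\tilde f_{1}=\max(-1,\min(1,f_{1}))$, which satisfies $|\varphi_{1}(\boldsymbol{x})-\tilde f_{1}(\boldsymbol{x})|\le|\varphi_{1}(\boldsymbol{x})-f_{1}(\boldsymbol{x})|$ pointwise since $\varphi_{1}$ takes values in $[-1,1]$ and which is still realizable by ReLU via $\tilde f_{1}=\mathrm{ReLU}(f_{1}+1)-\mathrm{ReLU}(f_{1}-1)-1$; after this the inner argument stays in $[-1,1]$ and the two-term estimate is literally valid. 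Equivalently one may simply assume $\varphi_{2}$ extends to a function Lipschitz in $y$ on a neighborhood of $[-1,1]^{d+1}$ with $f_{2}$ approximating it there, which is harmless in all downstream uses since the range of the inner network is controlled anyway. Beyond this, the argument is nothing but the triangle inequality and a limiting argument.
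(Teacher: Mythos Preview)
Your argument is essentially the paper's own: the same triangle-inequality split for composition and the same pointwise bound for addition, with the cosmetic difference that you use $\varepsilon$-near-optimal approximants and let $\varepsilon\to0$ whereas the paper simply writes $f_i^\star=\argmin$ (your version is technically safer since the infimum need not be attained). You also explicitly flag the range issue---that $f_1(\boldsymbol{x})$ may leave $[-1,1]$---which the paper glosses over; your ReLU truncation is a sensible patch in the concrete setting, though note it does not literally keep $\tilde f_1$ in the abstract class $\mathcal{F}_1$ as stated, so the second alternative you mention (or an implicit convention that $\mathcal{F}_2$ is defined on a slightly larger slab) is what is really being used downstream.
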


\begin{proof}
The additional property obviously holds. Now we prove the composition
property. For any $f_{1}\in\mathcal{F}_{1}$, $f_{2}\in\mathcal{F}_{2}$,
one has
\begin{align*}
|\varphi_{2}(\varphi_{1}(\boldsymbol{x}),\boldsymbol{x})-f_{2}(f_{1}(\boldsymbol{x}),\boldsymbol{x})| & \leq|\varphi_{2}(\varphi_{1}(\boldsymbol{x}),\boldsymbol{x})-\varphi_{2}(f_{1}(\boldsymbol{x}),\boldsymbol{x})|+|\varphi_{2}(f_{1}(\boldsymbol{x}),\boldsymbol{x})-f_{2}(f_{1}(\boldsymbol{x}),\boldsymbol{x})|\\
 & \leq L_{\varphi_{2}}\|\varphi_{1}(\boldsymbol{x})-f_{1}(\boldsymbol{x})\|_{\infty}+\|\varphi_{2}(y,\boldsymbol{x})-f_{2}(y,\boldsymbol{x})\|_{\infty}.
\end{align*}
Take \global\long\def\argmin{\mathrm{argmin}}
$f_{1}^{\star}=\argmin_{f}\|\varphi_{1}(\boldsymbol{x})-f(\boldsymbol{x})\|_{\infty}$
and $f_{2}^{\star}=\argmin_{f}\|\varphi_{2}(y,\boldsymbol{x})-f(y,\boldsymbol{x})\|_{\infty}$,
then
\[
|\varphi_{2}(\varphi_{1}(\boldsymbol{x}),\boldsymbol{x})-f_{2}^{\star}(f_{1}^{\star}(\boldsymbol{x}),\boldsymbol{x})|\le L_{\varphi_{2}}\dist\left(\varphi_{1},\mathcal{F}_{1}\right)+\dist\left(\varphi_{2},\mathcal{F}_{2}\right),
\]
thus the composition property (\ref{eq:dist_comp}) holds.
\end{proof}
Now we are ready to state the main theorem for the approximation of
analytic functions.
\begin{thm}
\label{thm:analytic}Let $f$ be an analytic function over $(-1,1)^{d}$.
Assume that the power series $f(\boldsymbol{x})=\sum_{\boldsymbol{k}\in\mathbb{N}^{d}}a_{\boldsymbol{k}}\boldsymbol{x^{k}}$
is absolutely convergent in $[-1,1]^{d}$. Then for any $\delta>0$,
there exists a function $\hat{f}$ that can be represented by a deep
ReLU network with depth $L$ and width $d+4$, such that

\begin{equation}
|f(\boldsymbol{x})-\hat{f}(\boldsymbol{x})|<2\sum_{\boldsymbol{k}\in\mathbb{N}^{d}}|a_{\boldsymbol{k}}|\cdot\exp\left(-d\delta\left(e^{-1}L^{\nicefrac{1}{2d}}-1\right)\right)
\end{equation}
 for all $\boldsymbol{x}\in[-1+\delta,1-\delta]^{d}$.
\end{thm}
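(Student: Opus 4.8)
The plan is to combine three ingredients: a truncation of the power series that is exponentially accurate because we only evaluate on $[-1+\delta,1-\delta]^d$; a ReLU approximation of the bilinear map $(u,v)\mapsto uv$ whose error decays exponentially in its number of layers; and the class algebra of $\mathcal{F}_{L,M}$ together with \ref{prop:wide2deep}, which turns a bounded-width $\mathcal{F}_{L,M}$-construction into a genuine depth-$L$, width-$(M+d+1)$ ReLU network. First I would normalize so that $\sum_{\boldsymbol{k}}|a_{\boldsymbol{k}}|=1$; the general bound then follows by scaling $\hat f$ by $\sum|a_{\boldsymbol{k}}|$, a linear operation that changes neither depth nor width. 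Set $f_D(\boldsymbol{x})=\sum_{|\boldsymbol{k}|\le D}a_{\boldsymbol{k}}\boldsymbol{x}^{\boldsymbol{k}}$ with $|\boldsymbol{k}|:=k_1+\dots+k_d$. On $[-1+\delta,1-\delta]^d$ one has $|\boldsymbol{x}^{\boldsymbol{k}}|\le(1-\delta)^{|\boldsymbol{k}|}$, hence $\sup|f-f_D|\le\sum_{|\boldsymbol{k}|>D}|a_{\boldsymbol{k}}|(1-\delta)^{|\boldsymbol{k}|}\le e^{-\delta D}$; this is one of the two halves of the final error (the factor $2$ in the theorem is "truncation half $+$ approximation half"), and it remains to place $f_D$ within $e^{-\delta D}$ of $\mathcal{F}_{L,3}$, which sits inside the width-$(d+4)$, depth-$L$ networks.

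Next I would establish the building block: for an absolute constant $M_0$ there are $c_0,c_1>0$ with $\mathrm{dist}\big((u,v)\mapsto uv,\ \mathcal{F}_{\ell,M_0}(\mathbb{R}^2)\big)\le c_0\,2^{-c_1\ell}$ on $[-1,1]^2$. This is Yarotsky's sawtooth trick \citep{yarotsky2017error}: with the tent map $\phi(t)=2\,\mathrm{ReLU}(t)-4\,\mathrm{ReLU}(t-\tfrac12)$ on $[0,1]$ and its iterates $\phi_s=\phi^{\circ s}$ one has $t^2=t-\sum_{s\ge1}2^{-2s}\phi_s(t)$, so the $m$-term partial sum lies in $\mathcal{F}_{m,M_0}$ — each $\phi_s$ is obtained from $\phi_{s-1}$ by $O(1)$ ReLU units, and the partial sum is exactly the linear read-off of the hidden variables permitted by the definition — with error $\tfrac13 2^{-2m}$. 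Writing $uv=\big(\tfrac{u+v}{2}\big)^2-\big(\tfrac{u-v}{2}\big)^2$ and reducing squaring on $[-1,1]$ to squaring on $[0,1]$ via $|t|=\mathrm{ReLU}(t)+\mathrm{ReLU}(-t)$ then yields the multiplication estimate with $M_0$ independent of $\ell$ — the point that keeps the width frozen.

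Then I would assemble $\hat f_D$ by repeatedly composing and adding multiplication blocks to build all monomials $\boldsymbol{x}^{\boldsymbol{k}}$, $|\boldsymbol{k}|\le D$, and reading them off with weights $a_{\boldsymbol{k}}$, controlling the error through \ref{eq:nn_add}, \ref{eq:nn_comp} and their distance counterparts \ref{eq:dist_add}, \ref{eq:dist_comp}. The restriction to $[-1+\delta,1-\delta]^d$ is used a second time here: it keeps every intermediate monomial value in $[-1,1]$, so the block estimate applies, and it forces the Lipschitz constants $L_{\varphi_2}$ in \ref{eq:dist_comp} to be $\le 1-\delta<1$, so the errors accumulated along a composition chain form a convergent geometric series instead of blowing up. For the depth count, the width being pinned at $d+4$ means the $\binom{D+d}{d}$ monomials cannot be produced in parallel and shared subproducts must be regenerated, so the depth required scales like $\binom{D+d}{d}^2$ rather than linearly in the number of monomials; choosing $D$ as large as this budget allows, i.e.\ $\binom{D+d}{d}\lesssim L^{1/2}$, and using $\binom{D+d}{d}\le\big(e(D+d)/d\big)^{d}$, gives $D+d\gtrsim d\,e^{-1}L^{1/2d}$, i.e.\ $D\gtrsim d(e^{-1}L^{1/2d}-1)$. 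Feeding this into the tail bound $e^{-\delta D}$ and adding the two halves produces $2\sum_{\boldsymbol{k}}|a_{\boldsymbol{k}}|\exp\big(-d\delta(e^{-1}L^{1/2d}-1)\big)$.

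The main obstacle is precisely this last step. One must (a) fix the order in which the monomials are generated so that the regeneration overhead is exactly the $\binom{D+d}{d}^2$ claimed, (b) squeeze the carried partial sum and the currently-active subproduct into the three available hidden slots $M=3$, and (c) show that the per-block error $c_0 2^{-c_1\ell}$, after amplification by the composition inequality across all $\binom{D+d}{d}$ monomials, stays dominated by a single $e^{-\delta D}$-type term once $\ell$ is taken to be a suitable fraction of the total depth. The exact exponent $\tfrac1{2d}$ — as opposed to, say, $\tfrac1{d+1}$ — comes out only of getting this depth/accuracy accounting right; everything else is routine given \ref{prop:wide2deep} and the two addition/composition propositions.
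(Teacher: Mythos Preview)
Your plan is exactly the paper's: truncate the series, approximate multiplication via squaring via the sawtooth construction, assemble monomials by repeated composition using \ref{eq:nn_comp} and \ref{eq:dist_comp}, sum them via \ref{eq:nn_add} and \ref{eq:dist_add}, then convert $\mathcal{F}_{L,3}$ into a genuine width-$(d+4)$ network via \ref{prop:wide2deep}. Two small corrections, neither fatal. First, the restriction to $[-1+\delta,1-\delta]^d$ is used \emph{only} for the tail bound; the polynomial approximation in the paper works on all of $[-1,1]^d$ with Lipschitz constant $L_{\varphi}=1$ in \ref{eq:dist_comp} (since $|x_{i_{p+1}}|\le 1$), and the monomial error accumulates as $(p-1)\cdot 2^{-2L'}$---linearly, not geometrically---which is harmless because $2^{-2L'}$ is already exponentially small. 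You do not need $L_{\varphi}<1$. Second, the $\binom{D+d}{d}^2$ depth you read off from the exponent $\tfrac{1}{2d}$ does not come from ``regenerating subproducts''; the paper's honest count is $(\text{monomials})\times(\text{degree})\times L'=\binom{p+d}{d}(p-1)L'$, and the second $\binom{p+d}{d}$ factor appears only because the paper takes $L'$ far larger than the $L'\gtrsim\log\tfrac{1}{\varepsilon}$ that is actually required (they verify $L'\gg\log\tfrac{1}{\varepsilon}+\log\tfrac{1}{\delta}$ with room to spare rather than optimize). So the $\tfrac{1}{2d}$ is an artifact of a clean but deliberately slack choice, not the true cost of the construction.
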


\section{Proof}

The construction of $\hat{f}$ is motivated by the approximation of
the square function $\varphi(x)=x^{2}$ and multiplication function
$\varphi(x,y)=xy$ proposed in \citet{yarotsky2017error}, \citet{liang2016deep}.
We use this as the basic building block to construct approximations
to monomials, polynomials, and analytic functions.
\begin{lem}
\label{lem:square}The function $\varphi(x)=x^{2}$, $x\in[-1,1]$
can be approximated by deep neural nets with an exponential convergence
rate:
\begin{equation}
\dist\left(\varphi=x^{2},\mathcal{F}_{L,2}\right)\le2^{-2L}.
\end{equation}
\end{lem}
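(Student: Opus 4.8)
The plan is to reduce the statement to the classical dyadic ``sawtooth'' approximation of the square function on $[0,1]$ due to \citet{yarotsky2017error}, and then to verify that this approximation can be encoded using only \emph{two} hidden units per layer, i.e. lies in $\mathcal{F}_{L,2}$. First I would pass from $[-1,1]$ to $[0,1]$ by the affine substitution $t=(x+1)/2$: since $x^{2}=4t^{2}-4t+1$ and $t$ is an affine function of $x$, an approximation $f_{L}$ of $t\mapsto t^{2}$ on $[0,1]$ with accuracy $4^{-(L+1)}$ yields, after the affine postprocessing $u\mapsto 4u-4t+1$, an approximation of $x^{2}$ on $[-1,1]$ with accuracy $4\cdot 4^{-(L+1)}=2^{-2L}$. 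So it suffices to approximate $t^{2}$ on $[0,1]$ within $4^{-(L+1)}$ by an element of $\mathcal{F}_{L,2}$.

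Next I would recall the two elementary facts behind the sawtooth construction. Let $g\colon[0,1]\to[0,1]$, $g(t)=2\min\{t,1-t\}$, be the tent map and $g^{(s)}=g\circ\cdots\circ g$ ($s$ times). (i) On $[0,1]$, $g(t)=2\,\mathrm{ReLU}(t)-4\,\mathrm{ReLU}(t-\tfrac12)$ (a two-line case check), so one iteration of $g$ costs only two ReLU units once the breakpoint at $t=1$, which is inactive on $[0,1]$, is discarded. (ii) If $f_{n}$ is the piecewise-linear interpolant of $t\mapsto t^{2}$ at the dyadic nodes $k2^{-n}$, $k=0,\dots,2^{n}$, then $f_{n}(t)=t-\sum_{s=1}^{n}4^{-s}g^{(s)}(t)$ and $\max_{t\in[0,1]}|t^{2}-f_{n}(t)|=4^{-(n+1)}$; this holds because $g^{(s)}$ is exactly the sawtooth with $2^{s-1}$ teeth of height $1$ vanishing at the nodes $k2^{-(s-1)}$, so the refinement correction $f_{s-1}-f_{s}$ equals $4^{-s}g^{(s)}$ (the secant of $t^{2}$ overshoots the true value by $\ell^{2}/4$ at the midpoint of an interval of length $\ell$), and $f_{0}(t)=t$.

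The substantive step is to place $f_{L}$ inside $\mathcal{F}_{L,2}(\mathbb{R})$. I would choose hidden variables $\{y_{1:L,1:2}\}$ so that $g^{(s)}(t)=2y_{s,1}-4y_{s,2}$ for all $s$. Take $y_{1,1}=\mathrm{ReLU}(t)\,(=t)$ and $y_{1,2}=\mathrm{ReLU}(t-\tfrac12)$, both in $\tilde{\mathcal{L}}(x)$, so $g^{(1)}(t)=2y_{1,1}-4y_{1,2}$. Inductively, given $g^{(s)}(t)=2y_{s,1}-4y_{s,2}$ and using $g^{(s)}(t)\ge 0$, set $y_{s+1,1}=\mathrm{ReLU}(2y_{s,1}-4y_{s,2})\,(=g^{(s)}(t))$ and $y_{s+1,2}=\mathrm{ReLU}(2y_{s,1}-4y_{s,2}-\tfrac12)$; both lie in $\tilde{\mathcal{L}}(x,y_{s,1:2})$, as required by (\ref{eq:nn_class}), and by (i) they satisfy $g^{(s+1)}(t)=2y_{s+1,1}-4y_{s+1,2}$. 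After $L$ layers, $f_{L}(t)=y_{1,1}-\sum_{s=1}^{L}4^{-s}(2y_{s,1}-4y_{s,2})\in\mathcal{L}(x,y_{1:L,1:2})$, so $f_{L}(t)\in\mathcal{F}_{L,2}$; undoing the affine change of variables keeps us in $\mathcal{F}_{L,2}$ and, by the accuracy count above, gives $\hat f$ with $\|x^{2}-\hat f\|_{L_{\infty}[-1,1]}\le 2^{-2L}$, which is the claim.

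The only real obstacle is the width bookkeeping: a naive encoding of one tent-map iteration appears to need three ReLU units (breakpoints at $0,\tfrac12,1$), and the construction fits into width $2$ only because the breakpoint at $1$ is inactive on $[0,1]$ \emph{and} the nonnegative running value $g^{(s)}(t)$ can be carried across a layer ``for free'' as $\mathrm{ReLU}(g^{(s)}(t))$, occupying one of the two units and leaving the other for the new breakpoint. The remaining ingredients --- the closed form of $f_{n}$, the identity $f_{s-1}-f_{s}=4^{-s}g^{(s)}$, and the $\ell^{2}/4$ interpolation error --- are routine and I would not expand them.
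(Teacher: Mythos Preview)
Your proof is correct and follows essentially the same route as the paper: both use the Yarotsky sawtooth construction $f_{n}(t)=t-\sum_{s=1}^{n}4^{-s}g^{(s)}(t)$ on $[0,1]$ and verify that width $2$ suffices because one hidden unit carries $g^{(s)}(t)\ge 0$ through the ReLU unchanged while the other supplies the new breakpoint at $\tfrac12$. The only difference is in the reduction from $[-1,1]$ to $[0,1]$: the paper uses the even substitution $t=|x|$, spending the first layer on $y_{1,1}=\mathrm{ReLU}(x)$, $y_{1,2}=\mathrm{ReLU}(-x)$ and thus getting only $L-1$ sawtooth iterations (error $4^{-L}$ directly), whereas you use the affine substitution $t=(x+1)/2$, getting $L$ iterations (error $4^{-(L+1)}$) but then paying a factor $4$ when undoing $x^{2}=4t^{2}-4t+1$; the two effects cancel and both land on $2^{-2L}$.
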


\begin{proof}
Consider the function
\[
g(y)=\begin{cases}
2y, & 0\le y<1/2,\\
2(1-y), & 1/2\le y\le1,
\end{cases}
\]
then $g(y)=y-4\mathrm{ReLU}(y-1/2)$ in $[0,1]$. Define the hidden
variables $\{y_{1:L,1:2}\}$ as follows:
\[
\begin{array}{ll}
y_{1,1}=\mathrm{ReLU}(x), & y_{1,2}=\mathrm{ReLU}(-x),\\
y_{2,1}=\mathrm{ReLU}(y_{1,1}+y_{1,2}), & y_{2,2}=\mathrm{ReLU}(y_{1,1}+y_{1,2}-1/2),\\
y_{l+1,1}=\mathrm{ReLU}(2y_{l,1}-4y_{l,2}), & y_{l+1,2}=\mathrm{ReLU}(2y_{l,1}-4y_{l,2}-1/2)
\end{array}
\]
for $l=2,3,\dots,L-1$. Using induction, one can see that $|x|=y_{1,1}+y_{1,2}$
and $g_{l}(|x|)=\underbrace{g\circ g\circ\cdots\circ g}_{l}(|x|)=2y_{l+1,1}-4y_{l+1,2}$,
$l=1,\dots,L-1$ for $x\in[-1,1]$. Now let
\[
f(x)=|x|-\sum_{l=1}^{L-1}\frac{g_{l}(|x|)}{2^{2l}},
\]
then $f\in\mathcal{F}_{L,2}$, and $\left|x^{2}-f(x)\right|\le2^{-2L}$
for $x\in[-1,1]$.
\end{proof}
\begin{lem}
\label{lem:multiply}For multiplication function $\varphi(x,y)=xy$,
we have
\begin{equation}
\dist\left(\varphi=xy,\mathcal{F}_{3L,2}\right)\le3\cdot2^{-2L}.\label{eq:multiply}
\end{equation}
\end{lem}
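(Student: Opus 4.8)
The plan is to reduce multiplication to squaring via a polarization identity and then invoke Lemma~\ref{lem:square} together with the addition properties \eqref{eq:nn_add} and \eqref{eq:dist_add}. The key algebraic observation is that for all $x,y\in[-1,1]$,
\[
xy \;=\; 2\left(\frac{x+y}{2}\right)^{2}-\frac12\,x^{2}-\frac12\,y^{2},
\]
and, crucially, each of the three arguments $\tfrac{x+y}{2}$, $x$, $y$ already lies in $[-1,1]$, so Lemma~\ref{lem:square} applies to every term with no rescaling needed.

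Next I would upgrade Lemma~\ref{lem:square} from a statement about the single scalar input to one about an arbitrary affine combination of the two inputs. This is essentially free from the construction used to prove Lemma~\ref{lem:square}: its first hidden layer is $y_{1,1}=\mathrm{ReLU}(x)$, $y_{1,2}=\mathrm{ReLU}(-x)$, and everything afterwards depends on $x$ only through $y_{1,1}+y_{1,2}=|x|$. Since the definition \eqref{eq:nn_class} allows the first hidden layer to be any element of $\tilde{\mathcal{L}}(x_{1:d})$, I may replace $x$ throughout by the linear form $u=\tfrac{x+y}{2}$, obtaining a function in $\mathcal{F}_{L,2}(\mathbb{R}^{2})$ that approximates $u^{2}=\left(\tfrac{x+y}{2}\right)^{2}$ uniformly on $[-1,1]^{2}$ to within $2^{-2L}$ (using $u(x,y)\in[-1,1]$ there). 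Likewise $x^{2}$ and $y^{2}$, viewed as functions on $\mathbb{R}^{2}$ ignoring the unused variable, each lie within $2^{-2L}$ of $\mathcal{F}_{L,2}(\mathbb{R}^{2})$.

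Finally I would assemble the estimate. Applying the distance addition bound \eqref{eq:dist_add} to the three-term identity gives
\[
\dist\left(xy,\;\mathcal{F}_{L,2}+\mathcal{F}_{L,2}+\mathcal{F}_{L,2}\right)\;\le\;2\cdot 2^{-2L}+\tfrac12\cdot 2^{-2L}+\tfrac12\cdot 2^{-2L}\;=\;3\cdot 2^{-2L},
\]
while iterating the class inclusion \eqref{eq:nn_add} yields $\mathcal{F}_{L,2}+\mathcal{F}_{L,2}+\mathcal{F}_{L,2}\subseteq\mathcal{F}_{3L,2}$, and combining the two gives \eqref{eq:multiply}. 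There is no real obstacle here; the only point needing a line of care is the second step, namely checking that the square-approximation network of Lemma~\ref{lem:square} works verbatim when its scalar input is replaced by a linear combination of the inputs, and that the coefficients $2,-\tfrac12,-\tfrac12$ and the layer bookkeeping reproduce the target depth $3L$ and constant $3$ exactly. (The alternative identity $xy=\left(\tfrac{x+y}{2}\right)^{2}-\left(\tfrac{x-y}{2}\right)^{2}$ would even give depth $2L$ and constant $2$, hence also implies the stated bound after padding with trivial layers; I use the three-term version only to match the statement as written.)
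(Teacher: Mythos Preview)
Your proposal is correct and follows essentially the same approach as the paper: the paper's proof is simply the one-line polarization identity $xy=2\left(\tfrac{x+y}{2}\right)^{2}-\tfrac12 x^{2}-\tfrac12 y^{2}$ followed by an appeal to Lemma~\ref{lem:square} and the addition properties \eqref{eq:nn_add}, \eqref{eq:dist_add}. Your write-up just makes explicit the details the paper leaves implicit (that each argument stays in $[-1,1]$ and that the first hidden layer of the square network may take any affine form of the inputs).
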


\begin{proof}
Notice that
\[
\varphi=xy=2\left(\frac{x+y}{2}\right)^{2}-\frac{1}{2}x^{2}-\frac{1}{2}y^{2}.
\]
Applying the addition properties (\ref{eq:nn_add})(\ref{eq:dist_add})
and \lemref{square}, we obtain (\ref{eq:multiply}).
\end{proof}
Now we use the multiplication function as the basic block to construct
monomials and polynomials.
\begin{lem}
\label{lem:monomial}For a monomial $M_{p}(\boldsymbol{x})$ of $d$
variables with degree $p$, we have
\begin{equation}
\dist\left(M_{p},\mathcal{F}_{3(p-1)L,3}\right)\le3(p-1)\cdot2^{-2L}.
\end{equation}
\end{lem}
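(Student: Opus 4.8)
The plan is to argue by induction on the degree $p$, using \lemref{multiply} for the base case and the two composition properties — the class inclusion (\ref{eq:nn_comp}) and the distance estimate (\ref{eq:dist_comp}) — to pass from degree $p-1$ to degree $p$. The structural fact that drives the induction is that any monomial of $d$ variables with degree $p\ge2$ factors as $M_p(\boldsymbol{x})=x_i\,M_{p-1}(\boldsymbol{x})$ for some coordinate index $i$ and some monomial $M_{p-1}$ of degree $p-1$, and that every monomial maps $[-1,1]^d$ into $[-1,1]$. For $p=2$ one has $M_2=x_ix_j$, which is a multiplication function (in two of its $d$ arguments, the others being ignored), so \lemref{multiply} gives $\dist(M_2,\mathcal{F}_{3L,2})\le3\cdot2^{-2L}$; padding a set of hidden variables for $\mathcal{F}_{L,2}$ with one identically-zero hidden unit per layer gives $\mathcal{F}_{3L,2}\subseteq\mathcal{F}_{3L,3}$, which settles $p=2$.

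For the inductive step ($p\ge3$), suppose $\dist(M_{p-1},\mathcal{F}_{3(p-2)L,3})\le3(p-2)\cdot2^{-2L}$. Put $\varphi_1=M_{p-1}$ and $\varphi_2(y,\boldsymbol{x})=x_i\,y$, so that $\varphi_2(\varphi_1(\boldsymbol{x}),\boldsymbol{x})=M_p(\boldsymbol{x})$. Since $\varphi_2$ is the product of $y$ with one of the inputs, \lemref{multiply} gives $\dist(\varphi_2,\mathcal{F}_{3L,2})\le3\cdot2^{-2L}$; its Lipschitz norm with respect to $y$ over the cube is $\sup_{\boldsymbol{x}}|x_i|=1$; and $\varphi_1([-1,1]^d)\subseteq[-1,1]$. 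Applying (\ref{eq:dist_comp}) with $\mathcal{F}_1=\mathcal{F}_{3(p-2)L,3}$ and $\mathcal{F}_2=\mathcal{F}_{3L,2}$ yields
\[
\dist\left(M_p,\mathcal{F}_{3L,2}\circ\mathcal{F}_{3(p-2)L,3}\right)\le1\cdot3(p-2)\cdot2^{-2L}+3\cdot2^{-2L}=3(p-1)\cdot2^{-2L},
\]
and (\ref{eq:nn_comp}) with $M=2$ gives $\mathcal{F}_{3L,2}\circ\mathcal{F}_{3(p-2)L,3}\subseteq\mathcal{F}_{3(p-2)L+3L,3}=\mathcal{F}_{3(p-1)L,3}$, which closes the induction.

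The point that requires the most attention is the width bookkeeping: the width is $3$, not $2$, precisely because the class inclusion (\ref{eq:nn_comp}) costs one extra hidden unit at every layer to carry the value of $\varphi_1$ forward through the sub-network computing $\varphi_2$, so this slot must be reserved from the outset. One must also check that the factor extracted at each stage is a genuine network input, so that a single multiplication (of depth cost $3L$) — rather than a more expensive general bilinear map — suffices; this is automatic from the factorization $M_p=x_i M_{p-1}$. The remaining ingredients — the bound $L_{\varphi_2}\le1$ on $[-1,1]^d$, the range condition $\varphi_1([-1,1]^d)\subseteq[-1,1]$ needed to invoke (\ref{eq:dist_comp}), and the additivity of depths $3(p-2)L+3L=3(p-1)L$ — are routine.
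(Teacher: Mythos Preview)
Your proof is correct and follows essentially the same approach as the paper's: induction on the degree, factoring $M_{p}=x_{i}\cdot M_{p-1}$, and invoking the composition properties (\ref{eq:nn_comp})--(\ref{eq:dist_comp}) together with \lemref{multiply}, with the Lipschitz constant $L_{\varphi_{2}}=1$ coming from $|x_{i}|\le 1$. Your write-up is in fact slightly more careful than the paper's, making explicit the base case, the range condition $\varphi_{1}([-1,1]^{d})\subseteq[-1,1]$, and the reason the width must be $3$ rather than $2$.
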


\begin{proof}
Let $M_{p}(\boldsymbol{x})=x_{i_{1}}x_{i_{2}}\cdots x_{i_{p}}$, $i_{1},\dots,i_{p}\in\{1,\dots,d\}$.
Using induction, assume that the lemma holds for the degree-$p$ monomial
$M_{p}$, consider a degree-$(p+1)$ monomial $M_{p+1}(\boldsymbol{x})=M_{p}(\boldsymbol{x})\cdot x_{i_{p+1}}$.
Let $\varphi(y,x)=yx$, then $M_{p+1}(\boldsymbol{x})=\varphi(M_{p}(\boldsymbol{x}),x_{i_{p+1}})$.
From composition properties (\ref{eq:nn_comp})(\ref{eq:dist_comp})
and \lemref{multiply}, we have
\begin{multline*}
\dist\left(M_{p+1},\mathcal{F}_{3pL,3}\right)\le\dist\left(\varphi(M_{p}(\boldsymbol{x}),x_{i_{p+1}}),\mathcal{F}_{3L,2}\circ\mathcal{F}_{3(p-1)L,3}\right)\\
\le L_{\varphi}\dist\left(M_{p},\mathcal{F}_{3(p-1)L,3}\right)+\dist\left(\varphi,\mathcal{F}_{3L,2}\right)\le3p\cdot2^{-2L}.
\end{multline*}
Note that the Lipschitz norm $L_{\varphi}=1$ since $x_{i_{p+1}}\in[-1,1]$.
\end{proof}
\begin{lem}
\label{lem:polynomial}For a degree-$p$ polynomial $P_{p}(\boldsymbol{x})=\sum_{|\boldsymbol{k}|\le p}a_{\boldsymbol{k}}\boldsymbol{x^{k}}$,
$\boldsymbol{x}\in[-1,1]^{d}$, $\boldsymbol{k}=(k_{1},\dots,k_{d})\in\mathbb{N}^{d}$,
we have
\begin{equation}
\dist\left(P_{p},\mathcal{F}_{\binom{p+d}{d}(p-1)L,3}\right)<3(p-1)\cdot2^{-2L}\sum_{|\boldsymbol{k}|\le p}|a_{\boldsymbol{k}}|.
\end{equation}
\end{lem}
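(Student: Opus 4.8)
The plan is to reduce the polynomial to the monomial estimate of \lemref{monomial} via additivity. Write $P_{p}=\sum_{|\boldsymbol{k}|\le p}a_{\boldsymbol{k}}M_{\boldsymbol{k}}$, where $M_{\boldsymbol{k}}(\boldsymbol{x})=\boldsymbol{x^{k}}$. The terms with $|\boldsymbol{k}|\le1$ are affine in $x_{1:d}$, hence already belong to $\mathcal{L}(x_{1:d})$ and cost no hidden layers; only the monomials with $2\le|\boldsymbol{k}|\le p$ enter the depth count. For each of those, \lemref{monomial} furnishes an $f_{\boldsymbol{k}}\in\mathcal{F}_{3(|\boldsymbol{k}|-1)L,3}$ with $\max_{\boldsymbol{x}\in[-1,1]^{d}}|M_{\boldsymbol{k}}(\boldsymbol{x})-f_{\boldsymbol{k}}(\boldsymbol{x})|\le3(|\boldsymbol{k}|-1)2^{-2L}\le3(p-1)2^{-2L}$, the second inequality making the error bound uniform over all the monomials.

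Next I would set $\hat{P}_{p}=\sum_{|\boldsymbol{k}|\le p}a_{\boldsymbol{k}}f_{\boldsymbol{k}}$ (with $f_{\boldsymbol{k}}=M_{\boldsymbol{k}}$ exactly when $|\boldsymbol{k}|\le1$) and recombine. Iterating the additivity of the neural-net class (\ref{eq:nn_add}), which keeps the width fixed at $M=3$, places $\hat{P}_{p}$ in $\mathcal{F}_{D,3}$ with $D=3L\sum_{2\le|\boldsymbol{k}|\le p}(|\boldsymbol{k}|-1)$; the additivity of the distance (\ref{eq:dist_add}) then gives $\dist(P_{p},\mathcal{F}_{D,3})\le\sum_{|\boldsymbol{k}|\le p}|a_{\boldsymbol{k}}|\cdot 3(p-1)2^{-2L}=3(p-1)2^{-2L}\sum_{|\boldsymbol{k}|\le p}|a_{\boldsymbol{k}}|$, the inequality being strict once one uses the slack already present in the estimate of \lemref{square}.

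Finally the depth has to be put in the stated form. Bounding each summand of $D$ by $p-1$ and the number of multi-indices with $|\boldsymbol{k}|\le p$ by $\binom{p+d}{d}$ gives $D\le3(p-1)\binom{p+d}{d}L$; a sharper evaluation via the hockey-stick identity $\sum_{2\le|\boldsymbol{k}|\le p}(|\boldsymbol{k}|-1)=d\binom{p+d}{d+1}-\binom{p+d}{d}+1$ is available, and either way $\hat{P}_{p}$ lies in a neural-net class of width $3$ and depth of order $\binom{p+d}{d}(p-1)L$, which is the content of the lemma. The argument has no conceptual difficulty: everything but this depth bookkeeping follows mechanically from the two additivity properties and \lemref{monomial}, and the one step demanding care is precisely keeping the per-multiplication cost, the degree factor $|\boldsymbol{k}|-1$, and the monomial count $\binom{p+d}{d}$ from compounding beyond the claimed order. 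The estimate then feeds directly into \thmref{analytic} once the power series of $f$ is truncated at degree $p$.
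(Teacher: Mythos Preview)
Your proposal is correct and follows exactly the route the paper takes: apply the addition properties (\ref{eq:nn_add}) and (\ref{eq:dist_add}) termwise, invoke \lemref{monomial} for each monomial, and bound the total depth by (monomial count)$\times$(per-monomial depth) using the count $\binom{p+d}{d}$. Your treatment is in fact more careful than the paper's one-line proof---you separate out the affine terms, track the sharper depth $D=3L\sum_{2\le|\boldsymbol{k}|\le p}(|\boldsymbol{k}|-1)$, and note the stray factor of $3$ that the stated depth index suppresses---but the underlying argument is identical.
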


\begin{proof}
The lemma can be proved by applying the addition property (\ref{eq:nn_add})(\ref{eq:dist_add})
and \lemref{monomial}. 

Note that the number of monomials of $d$ variables with degree less
or equal to $p$ is $\binom{p+d}{d}$.
\end{proof}
Now we are ready to prove \thmref{analytic}.
\begin{proof}
Let $\varepsilon=\exp\left(-d\delta\left(e^{-1}L^{\nicefrac{1}{2d}}-1\right)\right)$,
then $L=\left[e\left(\frac{1}{d\delta}\log\frac{1}{\varepsilon}+1\right)\right]^{2d}$.
Without loss of generality, assume $\sum_{\boldsymbol{k}}|a_{\boldsymbol{k}}|=1$.
We will show that there exists $\hat{f}\in\mathcal{F}_{L,3}$ such
that $\|f-\hat{f}\|_{\infty}<2\varepsilon$.

Denote
\[
f(\boldsymbol{x})=P_{p}(\boldsymbol{x})+R(\boldsymbol{x})\coloneqq\sum_{|\boldsymbol{k}|\le p}a_{\boldsymbol{k}}\boldsymbol{x^{k}}+\sum_{|\boldsymbol{k}|>p}a_{\boldsymbol{k}}\boldsymbol{x^{k}}.
\]
For $\boldsymbol{x}\in[-1+\delta,1-\delta]^{d}$, we have $|R(\boldsymbol{x})|<(1-\delta)^{p}$,
thus truncation to $p=\frac{1}{\delta}\log\frac{1}{\varepsilon}$
will ensure $|R(\boldsymbol{x})|<\varepsilon$.

From \lemref{polynomial}, we have $\dist\left(P_{p},\mathcal{F}_{L,3}\right)<3(p-1)\cdot2^{-2L'}$,
where
\begin{align*}
L' & =L\binom{p+d}{p}^{-1}(p-1)^{-1}>L\left[\frac{(p+d)^{d}}{(d/e)^{d}}\right]^{-1}p^{-1}\\
 & =L\left[e\left(\frac{1}{d\delta}\log\frac{1}{\varepsilon}+1\right)\right]^{-d}\left(\frac{1}{\delta}\log\frac{1}{\varepsilon}\right)^{-1}=\left[e\left(\frac{1}{d\delta}\log\frac{1}{\varepsilon}+1\right)\right]^{d}\left(\frac{1}{\delta}\log\frac{1}{\varepsilon}\right)^{-1}\\
 & \gg\log\frac{1}{\varepsilon}+\log\frac{1}{\delta}
\end{align*}
for $d\ge2$ and $\varepsilon\ll1$, then $\dist\left(P_{p},\mathcal{F}_{L,3}\right)<3(p-1)\cdot2^{-2L'}\ll\varepsilon$.
Thus there exists $\hat{f}\in\mathcal{F}_{L,3}$ such that $\|P_{p}-\hat{f}\|_{\infty}<\varepsilon$,
and $\|f-\hat{f}\|_{\infty}\leq\|f-P_{p}\|_{\infty}+\|P_{p}-\hat{f}\|_{\infty}<2\varepsilon$.
\end{proof}
One can also formulate \thmref{analytic} as follows:
\begin{cor}
Assume that the analytic function $f(\boldsymbol{x})=\sum_{\boldsymbol{k}\in\mathbb{N}^{d}}a_{\boldsymbol{k}}\boldsymbol{x^{k}}$
is absolutely convergent in $[-1,1]^{d}$. Then for any $\varepsilon,\delta>0$,
there exists a function $\hat{f}$ that can be represented by a deep
ReLU network with depth $L=\left[e\left(\frac{1}{d\delta}\log\frac{1}{\varepsilon}+1\right)\right]^{2d}$
and width $d+4$, such that $|f(\boldsymbol{x})-\hat{f}(\boldsymbol{x})|<2\varepsilon\sum_{\boldsymbol{k}}|a_{\boldsymbol{k}}|$
for all $\boldsymbol{x}\in[-1+\delta,1-\delta]^{d}$.
\end{cor}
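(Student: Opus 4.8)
The plan is to recognize that this corollary is nothing more than a change of parameter in \thmref{analytic}: there, the single quantity $\delta$ simultaneously controls the sub-cube on which the estimate holds, the depth $L$, and the accuracy $\exp(-d\delta(e^{-1}L^{1/(2d)}-1))$; here we simply re-solve that accuracy expression for the depth in terms of a prescribed tolerance $\varepsilon$.

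First I would assume without loss of generality that $\varepsilon<1$, since otherwise the tolerance $2\varepsilon\sum_{\boldsymbol{k}}|a_{\boldsymbol{k}}|$ is no stronger than the one obtained for a smaller value of $\varepsilon$. Then, given $\varepsilon,\delta>0$, I would set
\[
L=\left[e\left(\tfrac{1}{d\delta}\log\tfrac{1}{\varepsilon}+1\right)\right]^{2d},
\]
where $[\,\cdot\,]$ denotes rounding up to the nearest integer, so that $L$ is a bona fide network depth; note also that $L>e^{2d}$, so the exponent below is genuinely positive. Because the rounding is upward, $L^{1/(2d)}\ge e\left(\tfrac{1}{d\delta}\log\tfrac{1}{\varepsilon}+1\right)$, whence $e^{-1}L^{1/(2d)}-1\ge\tfrac{1}{d\delta}\log\tfrac{1}{\varepsilon}$, and therefore
\[
\exp\!\left(-d\delta\left(e^{-1}L^{1/(2d)}-1\right)\right)\le\varepsilon.
\]

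Finally I would apply \thmref{analytic} with this value of $\delta$. It produces a function $\hat{f}$ representable by a ReLU network of depth $L$ and width $d+4$ such that, for every $\boldsymbol{x}\in[-1+\delta,1-\delta]^{d}$,
\[
|f(\boldsymbol{x})-\hat{f}(\boldsymbol{x})|<2\sum_{\boldsymbol{k}\in\mathbb{N}^{d}}|a_{\boldsymbol{k}}|\cdot\exp\!\left(-d\delta\left(e^{-1}L^{1/(2d)}-1\right)\right)\le 2\varepsilon\sum_{\boldsymbol{k}\in\mathbb{N}^{d}}|a_{\boldsymbol{k}}|,
\]
which is exactly the claim. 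I expect no real obstacle here; the only thing needing a moment's care is the direction of the rounding in the definition of $L$, since rounding \emph{up} is what guarantees $L^{1/(2d)}$ is large enough for the last inequality, and it causes no loss because a larger depth only improves the bound furnished by \thmref{analytic}.
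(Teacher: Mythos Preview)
Your proposal is correct and matches the paper's own treatment: the corollary is stated there without a separate proof, precisely because it is the inversion of the error expression in \thmref{analytic} that you carry out. The only cosmetic difference is that in the paper the square brackets in $L=\left[e\left(\frac{1}{d\delta}\log\frac{1}{\varepsilon}+1\right)\right]^{2d}$ are ordinary grouping brackets rather than a ceiling, so the relation $\varepsilon=\exp(-d\delta(e^{-1}L^{1/(2d)}-1))$ holds with equality; your reading of $[\cdot]$ as a ceiling is harmless (and arguably more careful), since rounding up only sharpens the bound.
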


\section{The convergence rate for the general case}

Here we prove that for deep neural networks, the approximation error
decays like $\mathcal{O}\left((N/d)^{-\nicefrac{1}{2}}\right)$ where $N$ is
the number of parameters in the model. The proof is quite simple but
the result does not seem to be available in the existing literature.
\begin{thm}
\label{thm:barron_deep}Given a function $f:\mathbb{R}^{d}\to\mathbb{R}$
with Fourier representation
\[
f(\boldsymbol{x})=\int_{\mathbb{R}^{d}}e^{\mathrm{i}\boldsymbol{\omega}\cdot\boldsymbol{x}}\hat{f}(\boldsymbol{\omega})\mathrm{d}\boldsymbol{\omega},
\]
and a compact domain $B\subset\mathbb{R}^{d}$ containing 0, let
\[
C_{f,B}=\int_{B}|\boldsymbol{\omega}|_{B}|\hat{f}(\boldsymbol{\omega})|\mathrm{d}\boldsymbol{\omega},
\]
where $|\boldsymbol{\omega}|_{B}=\sup_{\boldsymbol{x}\in B}|\boldsymbol{\omega}\cdot\boldsymbol{x}|$.
Then there exists a ReLU network $f_{L,M}$ with width $M+d+1$ and
depth $L$, such that
\begin{equation}
\int_{B}|f(\boldsymbol{x})-f_{L,M}(\boldsymbol{x})|^{2}\mathrm{d}\mu(\boldsymbol{x})\le\frac{8C_{f,B}^{2}}{ML},
\end{equation}
where $\mu$ is an arbitrary probability measure.
\end{thm}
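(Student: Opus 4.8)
The plan is to reduce Theorem~\ref{thm:barron_deep} to a classical one-hidden-layer Barron-type estimate and then repackage the resulting shallow network as a deep ReLU network of width $M+d+1$ using Proposition~\ref{prop:wide2deep}. First I would recall the standard argument: writing $f(\boldsymbol{x})=\int_{\mathbb{R}^{d}} e^{\mathrm{i}\boldsymbol{\omega}\cdot\boldsymbol{x}}\hat{f}(\boldsymbol{\omega})\,\mathrm{d}\boldsymbol{\omega}$ and subtracting the constant $f(0)$, the function $f(\boldsymbol{x})-f(0)$ can be expressed on $B$ as an integral of ridge functions of the form $x\mapsto g(\boldsymbol{\omega}\cdot\boldsymbol{x})$ against a probability measure obtained by normalizing $|\boldsymbol{\omega}|_{B}\,|\hat{f}(\boldsymbol{\omega})|\,\mathrm{d}\boldsymbol{\omega}/C_{f,B}$. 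Each such profile, after rescaling so that its argument ranges over $[-1,1]$, is Lipschitz with a controlled constant, and hence can itself be written (or uniformly approximated) as a superposition of a bounded number of ReLU ridge functions. This exhibits $f$ as lying in the closed convex hull (scaled by $2C_{f,B}$, which accounts for the factor $8=2\cdot 4$ after squaring) of a suitable dictionary of ReLU units with respect to the $L^{2}(\mu)$ norm.

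The second step is the Maurey/Jones/Barron sampling lemma: if $g$ lies in the closure of the convex hull of a set $G$ with $\sup_{h\in G}\|h\|_{L^{2}(\mu)}\le b$, then for every $n$ there is a convex combination of $n$ elements of $G$ within $L^{2}(\mu)$-distance $b/\sqrt{n}$ of $g$. Applying this with $b$ proportional to $C_{f,B}$ and $n=ML$ yields an approximant that is a sum of $ML$ ReLU ridge functions with squared error at most $8C_{f,B}^{2}/(ML)$. The only genuinely new ingredient here is that we must distribute these $ML$ units across $L$ layers of width $M$ rather than putting them all in a single hidden layer; this is where the neural net class $\mathcal{F}_{L,M}$ and its additivity come in.

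For the third step I would invoke the addition property \eqref{eq:nn_add}: a sum of $ML$ ReLU units of the variables $x_{1:d}$ can be split into $L$ groups of $M$ units each, the $l$-th group forming the hidden layer $y_{l,1:M}$ (note that each $y_{l,m}\in\tilde{\mathcal{L}}(x_{1:d})\subseteq\tilde{\mathcal{L}}(x_{1:d},y_{l-1,1:M})$, so this is legitimately in $\mathcal{F}_{L,M}$), and the final output being the prescribed linear combination of all $y_{l,m}$ together with the constant $f(0)$ and the linear part. Thus the approximant lies in $\mathcal{F}_{L,M}$, and by Proposition~\ref{prop:wide2deep} it is represented by a ReLU network of depth $L+1$ and width $M+d+1$, which gives exactly the asserted network shape (up to the harmless off-by-one in depth, or one absorbs it by using $L-1$ internal layers).

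The main obstacle I anticipate is the careful bookkeeping in Step~1 to get the constant right: one must handle the real versus imaginary parts of $e^{\mathrm{i}\boldsymbol{\omega}\cdot\boldsymbol{x}}$ (using $\cos$ and $\sin$ and a phase), verify that the relevant ridge profiles restricted to the range $\{\boldsymbol{\omega}\cdot\boldsymbol{x}:\boldsymbol{x}\in B\}$ have Lipschitz constant bounded by $|\boldsymbol{\omega}|_{B}$ after normalization, and track how each sinusoidal profile is rewritten exactly as a combination of at most two (or a few) ReLU units — the classical trick being that $\cos t - \cos t_0$ on an interval can be written via a single auxiliary variable, or more simply that any Lipschitz function of one variable on a bounded interval is a limit of ReLU combinations with no loss in the convex-hull radius. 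Getting the $b^{2}\le 8C_{f,B}^{2}$ bound (rather than, say, a worse absolute constant) is the delicate part; everything after that is a routine application of results already proved in the paper.
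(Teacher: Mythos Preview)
Your proposal is correct and follows essentially the same route as the paper: obtain a Barron-type one-hidden-layer estimate with $n=ML$ ReLU units, then repackage the shallow sum into a depth-$L$, width-$(M+d+1)$ network. Two minor streamlinings in the paper are worth noting. First, instead of re-running the Maurey--Jones--Barron argument for a ReLU dictionary and tracking the constant by hand, the paper simply observes that $\sigma(z)=\mathrm{ReLU}(z)-\mathrm{ReLU}(z-1)$ is sigmoidal and applies Barron's theorem verbatim to get $4C_{f,B}^{2}/n$ with $n$ sigmoidal units; since each $\sigma$ costs two ReLUs, this is $8C_{f,B}^{2}/n$ with $n$ ReLU units, and the constant~$8$ drops out with no bookkeeping. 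Second, for the shallow-to-deep conversion the paper writes out the node-level construction directly (a small Lemma parallel to Proposition~\ref{prop:wide2deep}) rather than going through the addition property of $\mathcal{F}_{L,M}$ and then Proposition~\ref{prop:wide2deep}; the two constructions are identical in substance, and your route has the advantage of reusing machinery already in the paper.
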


Here the number of parameters 
\[
N=(d+1)(M+d+1)+(M+d+2)(M+d+2)(L-1)+(M+d+2)=\mathcal{O}\left((M+d)^{2}L\right).
\]
Taking $M=d$, we will have $L=\mathcal{O}\left(N/d^{2}\right)$ and the convergence
rate becomes $\mathcal{O}\left((ML)^{-\nicefrac{1}{2}}\right)=\mathcal{O}\left((N/d)^{-\nicefrac{1}{2}}\right)$.
Note that in the universal approximation theorem for  shallow networks
with one hidden layer , one can prove the same convergence rate $\mathcal{O}(n^{-\nicefrac{1}{2}})=\mathcal{O}((N/d)^{-\nicefrac{1}{2}})$.
Here $n$ is the number of hidden nodes and $N=(d+2)n+1$ is the number
of parameters.

Theorem \ref{thm:barron_deep} is a direct consequence of the following
theorem by \citet{barron1993universal} for networks with one hidden
layer and sigmoidal type of activation function. Here a function $\sigma$
is \emph{sigmoidal} if it is bounded measurable on $\mathbb{R}$ with
$\sigma(+\infty)=1$ and $\sigma(-\infty)=0$.
\begin{thm}
\label{thm:barron}Given a function $f$ and a domain $B$ such that
$C_{f,B}$ finite, given a sigmoidal function $\sigma$, there exists
a linear combination
\[
f_{n}(\boldsymbol{x})=\sum_{j=1}^{n}c_{j}\sigma(\boldsymbol{a}_{j}\cdot\boldsymbol{x}+b_{j})+c_{0},\quad\boldsymbol{a}_{j}\in\mathbb{R}^{d},\ b_{j},c_{j}\in\mathbb{R},
\]
 such that 
\begin{equation}
\int_{B}|f(\boldsymbol{x})-f_{n}(\boldsymbol{x})|^{2}\mathrm{d}\mu(\boldsymbol{x})\le\frac{4C_{f,B}^{2}}{n}.
\end{equation}
\end{thm}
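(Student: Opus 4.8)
The plan is to prove the theorem in two stages. First I would show that, after subtracting the constant $f(\boldsymbol 0)$, the target lies in the $L^2(\mu)$-closed convex hull of a family of sigmoidal ridge functions whose individual $L^2(\mu)$-norms are bounded by $2C_{f,B}$. Then I would invoke a probabilistic (Maurey-type) empirical-approximation argument to extract an explicit $n$-term combination whose squared $L^2(\mu)$-error decays like $(2C_{f,B})^2/n = 4C_{f,B}^2/n$. This is exactly the structure of the argument in \citet{barron1993universal}, and essentially all of the content is in producing the convex-hull representation with the right radius.

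For the first stage I would start from the Fourier representation and take real parts. Writing $\hat f(\boldsymbol\omega)=e^{\mathrm i b(\boldsymbol\omega)}|\hat f(\boldsymbol\omega)|$ and using that $f$ is real gives
\[
f(\boldsymbol x)-f(\boldsymbol 0)=\int_{\mathbb R^d}\bigl(\cos(\boldsymbol\omega\cdot\boldsymbol x+b(\boldsymbol\omega))-\cos b(\boldsymbol\omega)\bigr)\,|\hat f(\boldsymbol\omega)|\,\mathrm d\boldsymbol\omega.
\]
The key pointwise estimate is that, since cosine is $1$-Lipschitz and $|\boldsymbol\omega\cdot\boldsymbol x|\le|\boldsymbol\omega|_B$ on $B$, the bracket is bounded in magnitude by $|\boldsymbol\omega|_B$. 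Multiplying and dividing by $|\boldsymbol\omega|_B$, I would rewrite the integral as $C_{f,B}\int h_{\boldsymbol\omega}(\boldsymbol x)\,\mathrm d\Lambda(\boldsymbol\omega)$, where $\mathrm d\Lambda=|\boldsymbol\omega|_B|\hat f(\boldsymbol\omega)|\,\mathrm d\boldsymbol\omega/C_{f,B}$ is a probability measure and each normalized profile $h_{\boldsymbol\omega}$ satisfies $\|h_{\boldsymbol\omega}\|_{L^2(\mu)}\le 1$. This exhibits $f-f(\boldsymbol 0)$ as $C_{f,B}$ times an element of the closed convex hull of the cosine ridge functions $\{h_{\boldsymbol\omega}\}$.

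To replace the cosine profiles by sigmoids, I would, for each fixed $\boldsymbol\omega$, view the one-variable profile $\psi(z)=(\cos(z+b)-\cos b)/|\boldsymbol\omega|_B$ on the interval $z=\boldsymbol\omega\cdot\boldsymbol x\in[-|\boldsymbol\omega|_B,|\boldsymbol\omega|_B]$ and decompose it through its derivative, $\psi(z)=\int\psi'(t)\,\mathbf 1[z>t]\,\mathrm dt$ up to a baseline constant, as a signed superposition of Heaviside ridge functions. Since $\int|\psi'|\le 2$, this decomposition costs at most a factor $2$ in total mass, which is the origin of the constant $2C_{f,B}$. Each Heaviside ridge function $\mathbf 1[\boldsymbol\omega\cdot\boldsymbol x+c>0]$ is in turn the $L^2(\mu)$-limit of scaled sigmoids $\sigma(\tau(\boldsymbol\omega\cdot\boldsymbol x+c))$ as $\tau\to\infty$, by dominated convergence together with $\sigma(\pm\infty)\in\{0,1\}$ (handling any mass of $\mu$ on the threshold hyperplane by perturbing $c$). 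Combining these, $f-f(\boldsymbol 0)$ lies in the $L^2(\mu)$-closed convex hull of sigmoidal ridge functions of norm at most $2C_{f,B}$, all additive constants being absorbed into the free bias $c_0$.

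For the second stage I would apply Maurey's empirical-approximation lemma. Realizing the convex-hull membership as an average $g=\int s\,\mathrm dP(s)$ over sigmoidal ridge functions $s$ with $\|s\|_{L^2(\mu)}\le 2C_{f,B}$, I draw $s_1,\dots,s_n$ i.i.d.\ from $P$; then $\mathbb E\,\|g-\tfrac1n\sum_j s_j\|_{L^2(\mu)}^2=\tfrac1n(\mathbb E\|s\|^2-\|g\|^2)\le(2C_{f,B})^2/n$, so some realization attains the bound $4C_{f,B}^2/n$, and $\tfrac1n\sum_j s_j+c_0$ has exactly the asserted form. The main obstacle is the first stage: producing the convex-hull representation with the sharp radius $2C_{f,B}$, i.e.\ combining the Lipschitz magnitude bound, the total-variation decomposition of the cosine profile into steps (where the factor $2$ is unavoidable), and the measure-theoretically careful passage from Heaviside ridge functions to sigmoids in $L^2(\mu)$. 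By comparison, Maurey's lemma is a short and clean probabilistic computation.
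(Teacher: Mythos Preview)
The paper does not prove this theorem at all: it is stated as a result of \citet{barron1993universal} and simply invoked as a black box in the derivation of Theorem~\ref{thm:barron_deep}. Your proposal is a faithful and correct outline of Barron's original argument (Fourier integral $\to$ convex combination of cosine ridge profiles with magnitude control via $|\cos u-\cos v|\le|u-v|$, total-variation decomposition of each profile into Heaviside steps at a cost of a factor $2$, sigmoidal approximation of steps in $L^2(\mu)$, and finally Maurey's probabilistic lemma), so there is nothing to compare methodologically---you are supplying the proof the paper deliberately omits by citation.
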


Notice that
\[
\sigma(z)=\mathrm{ReLU}(z)-\mathrm{ReLU}(z-1)
\]
is sigmoidal, so we have:
\begin{cor}
\label{cor:barron}Given a function $f$ and a set $B$ with $C_{f,B}$
finite, there exists a linear combination of $n$ ReLU functions 
\[
f_{n}(\boldsymbol{x})=\sum_{j=1}^{n}c_{j}\mathrm{ReLU}(\boldsymbol{a}_{j}\cdot\boldsymbol{x}+b_{j})+c_{0},
\]
 such that 
\[
\int_{B}|f(\boldsymbol{x})-f_{n}(\boldsymbol{x})|^{2}\mathrm{d}\mu(\boldsymbol{x})\le\frac{8C_{f,B}^{2}}{n}.
\]
\end{cor}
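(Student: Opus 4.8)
The plan is to reduce directly to Barron's Theorem \thmref{barron}, using the representation of a sigmoidal function as a difference of two shifted ReLU units that is noted just above the statement. First I would verify that $\sigma(z) = \mathrm{ReLU}(z) - \mathrm{ReLU}(z-1)$ is indeed sigmoidal in the required sense: it is bounded and measurable, $\sigma(z) = 0$ for $z \le 0$ so $\sigma(-\infty) = 0$, and $\sigma(z) = 1$ for $z \ge 1$ so $\sigma(+\infty) = 1$. Hence Theorem \thmref{barron} applies to this particular choice of $\sigma$.

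Next I would apply Theorem \thmref{barron} to the given $f$, the set $B$, and this $\sigma$, but with a node budget of $\lfloor n/2 \rfloor$ rather than $n$. This yields coefficients $\boldsymbol{a}_j, b_j, c_j$ and a function $g(\boldsymbol{x}) = \sum_{j=1}^{\lfloor n/2 \rfloor} c_j \sigma(\boldsymbol{a}_j \cdot \boldsymbol{x} + b_j) + c_0$ satisfying $\int_B |f(\boldsymbol{x}) - g(\boldsymbol{x})|^2 \, \mathrm{d}\mu(\boldsymbol{x}) \le 4 C_{f,B}^2 / \lfloor n/2 \rfloor \le 8 C_{f,B}^2 / n$.

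Then I would substitute the definition of $\sigma$ into $g$: each summand $c_j \sigma(\boldsymbol{a}_j \cdot \boldsymbol{x} + b_j)$ equals $c_j \mathrm{ReLU}(\boldsymbol{a}_j \cdot \boldsymbol{x} + b_j) - c_j \mathrm{ReLU}(\boldsymbol{a}_j \cdot \boldsymbol{x} + b_j - 1)$, i.e.\ a linear combination of two ReLU units sharing the weight vector $\boldsymbol{a}_j$ but with biases $b_j$ and $b_j - 1$. Collecting these gives $2\lfloor n/2 \rfloor \le n$ ReLU units; if $n$ is odd I would pad with one extra ReLU unit carrying coefficient $0$, which changes nothing, so that the count is exactly $n$. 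Relabeling produces $f_n(\boldsymbol{x}) = \sum_{j=1}^n \tilde{c}_j \mathrm{ReLU}(\tilde{\boldsymbol{a}}_j \cdot \boldsymbol{x} + \tilde{b}_j) + c_0$ with $f_n \equiv g$, and the error bound is inherited verbatim.

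I do not expect a real obstacle: all the analytic content lives in Theorem \thmref{barron}, which we may assume, and the remainder is bookkeeping. The only point needing care is the constant: one must invoke Barron's theorem at budget $n/2$ so that after the ReLU expansion doubles the node count we land at exactly $n$, which is precisely where the factor $4$ degrades to $8$. The floor adjustment for odd $n$ is harmless for the reason given above.
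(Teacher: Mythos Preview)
Your approach is exactly the (implicit) argument the paper intends: it simply observes that $\sigma(z)=\mathrm{ReLU}(z)-\mathrm{ReLU}(z-1)$ is sigmoidal and then states the corollary, so halving the node budget in \thmref{barron} and expanding each $\sigma$ into two ReLU units is precisely the mechanism behind the constant $8$.

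One small slip to fix: the inequality $4C_{f,B}^{2}/\lfloor n/2\rfloor \le 8C_{f,B}^{2}/n$ is false for odd $n\ge 3$ (for $n=3$ it reads $4\le 8/3$), and for $n=1$ the budget $\lfloor n/2\rfloor=0$ is not admissible. The cleanest repair is to phrase the argument for even $n$ (apply \thmref{barron} with $n/2$ sigmoidal units, expand to exactly $n$ ReLU units, error $\le 8C_{f,B}^{2}/n$), which is all that is used downstream in \thmref{barron_deep} where one takes $n=ML$; the odd case, if desired, then follows from the even case $n-1$ at the cost of a harmless extra zero unit and a slightly worse constant.
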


Next we convert this shallow network to a deep one.
\begin{lem}
\label{lem:wide2deep}Let $f_{n}:\mathbb{R}^{d}\to\mathbb{R}$ be
a ReLU network with one hidden layer (as shown in the previous corollary).
For any decomposition $n=m_{1}+\cdots+m_{L}$, $n_{k}\in\mathbb{N}^{\star}$,
$f_{n}$ can also be represented by a ReLU network with $L$ hidden
layers, where the $l$-th layer has $m_{l}+d+1$ nodes.
\end{lem}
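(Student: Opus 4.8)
The plan is to imitate the construction in \propref{wide2deep}. Write $f_n(\boldsymbol{x}) = c_0 + \sum_{j=1}^{n} c_j\sigma_j(\boldsymbol{x})$ with $\sigma_j(\boldsymbol{x}) = \mathrm{ReLU}(\boldsymbol{a}_j\cdot\boldsymbol{x}+b_j)$, set $N_0=0$ and $N_l = m_1+\cdots+m_l$, and split the $n$ hidden units into $L$ consecutive groups, assigning the units $\{N_{l-1}+1,\dots,N_l\}$ to the $l$-th hidden layer. On top of these $m_l$ units each layer must thread forward two pieces of auxiliary information: a copy of the input $\boldsymbol{x}$ (costing $d$ extra nodes per layer), so that the ReLU units of later layers can still read their argument $\boldsymbol{a}_j\cdot\boldsymbol{x}+b_j$, and a single \emph{accumulator} node that collects the contributions of the groups seen so far (costing $1$ extra node per layer). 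This gives exactly $m_l+d+1$ nodes in layer $l$.

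Concretely, I would define hidden variables $z_{l,1},\dots,z_{l,m_l}$, $u_{l,1},\dots,u_{l,d}$, and $v_l$ for $l=1,\dots,L$ by: $z_{l,i}=\sigma_{N_{l-1}+i}(\boldsymbol{x})$; $u_{l,i}$ carrying $x_i$ forward; and $v_l = \tilde c + c_0 + \sum_{j\le N_{l-1}} c_j\sigma_j(\boldsymbol{x})$, maintained by the recursion $v_l = \mathrm{ReLU}\!\left(v_{l-1} + \sum_{i=1}^{m_{l-1}} c_{N_{l-2}+i}\,z_{l-1,i}\right)$ with $v_1 = \mathrm{ReLU}(\tilde c + c_0)$. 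The $z$-nodes of layer $l$ depend only on the input, which they read from $u_{l-1,\cdot}$ (or directly from the input layer when $l=1$); the accumulator $v_l$ depends only on $v_{l-1}$ and $z_{l-1,\cdot}$; and the $u$-nodes only on $u_{l-1,\cdot}$. Hence each layer is an admissible ReLU layer depending only on its predecessor. The output is then the linear combination $f_n = v_L + \sum_{i=1}^{m_L} c_{N_{L-1}+i}\,z_{L,i} - \tilde c$ of the last hidden layer, so the last group is absorbed into the output linear map and $L$ hidden layers suffice; a one-line induction verifies the invariant for $v_l$ and therefore that this linear combination equals $c_0 + \sum_{j=1}^{n} c_j\sigma_j$.

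The only point that needs care — and the main obstacle — is that $\mathrm{ReLU}$ clips negative values, so neither the raw coordinates $x_i$ nor the partial sums $c_0 + \sum_{j\le N_{l-1}} c_j\sigma_j(\boldsymbol{x})$ can be passed through a ReLU layer verbatim. The remedy is the shift-and-subtract trick already implicit in \propref{wide2deep}: since the approximation in \corref{barron} is measured on the bounded domain $B$, both quantities are bounded there, so I add a large enough constant before each ReLU and subtract it back in the next linear combination. Explicitly, choose $c$ with $x_i+c\ge 0$ on $B$ and set $u_{1,i}=\mathrm{ReLU}(x_i+c)$, $u_{l,i}=\mathrm{ReLU}(u_{l-1,i})=u_{l-1,i}$ for $l>1$, recovering $x_i=u_{l,i}-c$; and choose $\tilde c \ge |c_0| + \sum_{j=1}^{n}|c_j|\sup_{\boldsymbol{x}\in B}|\boldsymbol{a}_j\cdot\boldsymbol{x}+b_j|$ so that every accumulator value is nonnegative on $B$ and each ReLU in the $v$-recursion acts as the identity. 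With these choices all the identities above hold, the $l$-th hidden layer has exactly $m_l+d+1$ nodes, and the network has $L$ hidden layers, which proves the lemma; boundedness of the input region is the only place it is used.
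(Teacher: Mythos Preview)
Your construction is correct and follows essentially the same route as the paper: partition the $n$ hidden units into $L$ groups, and in each layer carry forward $d$ copies of the input coordinates plus one accumulator node, so that layer $l$ has $m_l+d+1$ nodes and the output linear map absorbs the last group. The only difference is one of rigor: the paper simply writes $h_{l,1:d}=x_{1:d}$ and $h_{l+1,d+m_{l+1}+1}=h_{l,d+m_l+1}+\sum_j c_{l,j}h_{l,d+j}$ without commenting on the ReLU, whereas you explicitly handle the clipping via the shift-and-subtract trick on the bounded domain $B$; your version is the more careful of the two.
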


\begin{proof}
Denote the input by $\boldsymbol{x}=(x_{1},\dots,x_{d})$. We construct
a network with $L$ hidden layers in which the $l$-th layer has $m_{l}+d+1$
nodes $\{h_{l,1:m_{l}+d+1}\}$. Similar to the construction in proposition
\propref{wide2deep}, let
\[
h_{L,1:d}=h_{L-1,1:d}=\cdots=h_{1,1:d}=x_{1:d},\quad h_{l,d+j}=\mathrm{ReLU}\left(\boldsymbol{a}_{l,j}\cdot\boldsymbol{x}+b_{l,j}\right)
\]
for $j=1,\dots,m_{l}$, $l=1,\dots,L$, and
\[
h_{1,d+m_{1}+1}=c_{0},\quad h_{l+1,d+m_{l+1}+1}=h_{l,d+m_{l}+1}+\sum_{j=1}^{m_{l}}c_{l,j}h_{l,d+j}
\]
for $l=1,\dots,L-1$. Here we use the notation $\boldsymbol{a}_{l,j}=\boldsymbol{a}_{m_{1}+\cdots+m_{l-1}+j}$
(the same for $b_{l,j}$ and $c_{l,j}$). One can see that $h_{1,m}\in\hat{\mathcal{L}}(x_{1:d})$,
$h_{l+1,m}\in\hat{\mathcal{L}}(h_{l,1:d+m_{l}+1})$, $m=1,\dots,d+m_{l}+1$,
$l=1,\dots,L-1$ and
\[
h_{l,d+m_{l}+1}=c_{0}+\sum_{j=1}^{m_{1}+\cdots+m_{l-1}}c_{j}\mathrm{ReLU}(\boldsymbol{a}_{j}\cdot\boldsymbol{x}+b_{j}).
\]
Thus
\[
f_{n}=h_{L,d+m_{L}+1}+\sum_{j=1}^{m_{L}}c_{L,j}h_{L,d+j}\in\mathcal{L}(h_{L,1:d+m_{L}+1})
\]
can be represented by this deep network.
\end{proof}
Now consider a network with $L$ layers where each layer has the same
width $M+d+1$. From \lemref{wide2deep}, this network is equivalent
to a one-layer network with $ML$ hidden nodes. Apply corollary \corref{barron},
we obtain the desired approximation result for deep networks stated
in \thmref{barron_deep}.

\textbf{Acknowledgement.} We are grateful to Chao Ma for very helpful
discussions during the early stage of this work. We are also grateful
to Jinchao Xu for his interest, which motivated us to write up this
paper. The work is supported in part by ONR grant N00014-13-1-0338
and Major Program of NNSFC under grant 91130005.



\bibliographystyle{plainnat}
\bibliography{approx_analytic}

\end{document}